\def\NAT@def@citea{\def\@citea{\NAT@separator}}% Suppress spaces between citations using natbib.sty
\theoremstyle{plain}% Theorem-like structures provided by amsthm.sty
\newtheorem{theorem}{Theorem}[section]
\newtheorem{lemma}[theorem]{Lemma}
\newtheorem{proposition}[theorem]{Proposition}
\theoremstyle{definition}
\theoremstyle{remark}
\newtheorem{remark}{Remark}
\begin{document}

\articletype{}% Specify the article type or omit as appropriate

\title{Fast Deflation Sparse Principal Component Analysis \\via Subspace Projections}

\author{
\name{Cong Xu\textsuperscript{a},
           Min Yang\textsuperscript{a}\thanks{CONTACT Min Yang. Email: yang@ytu.edu.cn}
           Jin Zhang\textsuperscript{b}}
\affil{\textsuperscript{a}School of Mathematics and Information Sciences, Yantai University, Yantai, China;
         \textsuperscript{b}Department of Mathematics, Shandong Normal University, Jinan, China}
}

\maketitle

\begin{abstract}
The implementation of conventional sparse principal component analysis (SPCA)
on high-dimensional data sets has become a time consuming work.
In this paper, a series of subspace projections are constructed efficiently by using Household QR factorization.
With the aid of these subspace projections,
a fast deflation method, called SPCA-SP, is developed for SPCA.
This method keeps a good tradeoff between various criteria,
including sparsity, orthogonality, explained variance, balance of sparsity, and computational cost.
Comparative experiments on the benchmark data sets confirm the effectiveness of the proposed method.
\end{abstract}

\begin{keywords}
Deflation method; QR factorization; Sparse PCA; Subspace projection; Truncation
\end{keywords}

\section{Introduction}
Principal component analysis (PCA) \cite{Bro2014PCA,Jolliffe1986pca, prei1988}
is a traditional and widely used tool for data processing and dimensionality reduction \cite{abra2014,Aitsa2019,bou2014,dhi2017,Fu2016,han2006,hron2016simplicial,Saad1998}.
Given a data set, PCA aims at finding  a sequence of orthogonal vectors that represent the directions of largest variance.
By capturing these directions, the principal components offer a way to compress the data with minimum information loss.
However, principal components are usually linear combinations of all original features.
That is, the weights in the linear combinations (known as loadings) are typically non-zero.
In this sense, it is difficult to give a good physical interpretation.

\smallskip
During the past decade, various sparse principal component analysis (SPCA) approaches
have been developed to improve the interpretability of principal components.
SPCA is an extension of PCA that aims at finding sparse loading vectors capturing the maximum amount of variance in the data.
These SPCA methods can be categorized into two groups:
block methods \cite{hu2016sparse,jolliffe2003a,kawano2015sparse, kawano2018sparse,ma2013,zou2006sparse}
and deflation methods \cite{daspremont2007a,daspremont2007full,mackey2009deflation,shen2008sparse}.
Block methods aims to find all sparse loadings together,
while deflation methods compute one loading at a time.
For examples,
Zou et al. \cite{zou2006sparse}
formulated sparse PCA as a regression-type optimization problem by imposing the
LASSO penalty on the regression coefficients.
Mackey \cite{mackey2009deflation} considered several deflation approaches
to explicitly maximize the additional variance under certain cardinality constraint.
Journee et al. \cite{journee2010generalized} developed the generalized power method,
in which sparse PCA is formulated as two single-unit and two block optimization problems.
Yuan et al. \cite{yuan2013truncated} proposed a fast SPCA method by combining the power method with truncation operation.
Recently, Hu et al. \cite{hu2016sparse} studied several SPCA algorithms via rotation and truncation.
Most existing SPCA methods deal with the original data directly.
However, due to the growing ease of observing variables,
high-dimensional data become more and more common,
which makes traditional methods very time consuming.

\smallskip
In recent years, a number of randomized methods, e.g. \cite{abra2014,Hal2011}, have been developed to enable fast PCA.
These methods first utilize random subspace projection to generate a small matrix
that captures the most explained variance in the original data.
Then standard SVD or eigen-decomposition is performed on this reduced matrix.
Nevertheless, it is still not clear how to extend such technique to SPCA.
Different from PCA,
in the framework of SPCA,
except explained variance,
there are more criteria like sparsity and orthogonality that need special attention.
A desirable fast SPCA should keep a good tradeoff between various criteria,
including sparsity, orthogonality, explained variance, balance of sparsity among loadings, and computational cost.

\smallskip
In this paper, we develop a fast deflation sparse PCA via subspace projections (SPCA-SP).
Similar to fast PCA \cite{abra2014},
we use randomized SVD algorithm to generate an initial subspace projection.
In addtion,
a series of extra subspace projections are constructed
by applying Household QR factorization to some auxiliary compound matrices.
These projections restrict the search space of each loading belonging to a low dimensional subspace,
while taking into account the orthogonality of the sparse loadings.
The corresponding construction process is quite technical.
The proposed approach belongs to a greedy algorithm based on postprocessing.
It mainly consists three alternative steps to find a sparse loading:
1) constructing the subspace projection;
2) searching an auxiliary low-dimensional PCA loading by using power method;
and
3) processing by truncating operation.

\smallskip
Our SPCA-SP method has the following merits:
1) Due to the introduction of subspace projections, the time cost of SPCA-SP could be very low even in high-dimensional cases.
2) Thanks to QR factorization, the computed sparse loadings are nearly orthogonal under small truncation.
3) Independent truncation for each loading tends to produce a balanced sparsity pattern.
Experimental results show that the developed method are comparable to other state-of-the-art  SPCA algorithms in quality,
while much more efficient in run time.

\smallskip
The remainder of the paper is organized as follows.
In Section 2, we introduce the basic ideas of deflation method and several truncation operators.
The proposed SPCA-SP method is presented in Section 3,
and an interesting connection between sparsity and orthogonality is also revealed.
Experiment results are provided in Section 4.
Finally, the conclusion is drawn in Section 5.

\section{Preliminaries}
Throughout the paper,
we use $ \| \cdot \| $ to denote the Euclidean norm of a vector,
$ \| \cdot \|_0 $ the count of nonzero entries,
$\| \cdot \|_F$  the Frobenius norm of a matrix.

\subsection{Deflation method for PCA}
We first introduce the deflation in the context of PCA.
Let $X \in \mathbb{R}^{n \times d}$ be a data matrix encoding $n$ samples and $d$ variables.
Without loss of generality, we assume that the variables contained in the columns of $ X $ are centered.
Let $ A_0=X^T X \in \mathbb{R}^{d \times d} $ denote the sample covariance matrix.

\smallskip
Deflation method aims to find $ r $ principal components by solving the following optimization problem sequentially:
\begin{align}
\label{DPCA}
  \mathbf{z}_t=\arg \max_{\mathbf{z}\in \mathbb{R}^d} \mathbf{z}^T A_{t-1} \mathbf{z},
  \quad
  \text{s.t.}\;
  \|\mathbf{z}\|=1,
\end{align}
for $  t=1,2,\ldots, r $.
The matrix $ A_{t} $ should be updated recursively  to eliminate the influence of the previous computed loading.
For instance, a widely used deflation formula is
\begin{align}
\label{DefinitionC}
  A_{t}= (I-\mathbf{z}_t \mathbf{z}_t^T ) A_{t-1} (I-\mathbf{z}_t \mathbf{z}_t^T ).
\end{align}
Note that the size of $A_{t} $ is fixed as $ d \times d  $ in each round,
which brings up a heavy cumulative workload in high dimensions.
In Section 3, we shall introduce some subspace projections to alleviate this problem.

\subsection{Truncation methods}
Given a principal component $ \mathbf{z}=(z_1,\ldots, z_d)^T \in \mathbb{R}^d  $,
it is a common way to employ an additional truncation operation to ensure sparsity \cite{hu2016sparse,mackey2009deflation,yuan2013truncated}.
In this paper,
we will use the truncation operator $ \mathcal{T}_\lambda (\cdot): \mathbb{R}^d \rightarrow  \mathbb{R}^d $,
which is one of the following three types.
\begin{itemize}
\item
 $\mathcal{T}_S$ (Truncation by Sparsity).
Given a cardinality $ 0 < \kappa_S < d $,
truncate the smallest $\kappa_S$ entries according to their absolute values.
The main advantage of $\mathcal{T}_S$ lies in its direct control of sparsity.

\item
$ \mathcal{T}_{E} $ (Truncation by Energy).
Sort the entries of $ \mathbf{z} $  in ascending order such that $ |\bar{z}_1| \leq |\bar{z}_2| \leq \ldots \leq |\bar{z}_d| $.
For a given real number  $ 0<\kappa_E < 1$,
choose $ i^* = \max  \{i\} $ with $ i $ satisfying $ \sum_{j=1}^i \bar{z}_j^2 \leq \kappa_E  \|\mathbf{z}\|^2 $.
Then truncate the smallest $ i^* $  entries, whose energy accounts for at most $ \kappa_E $ proportion,

\item $\mathcal{T}_{H}$ (Hard-Threshholding).
Given a threshold $ \kappa_H>0 $, set $\mathcal{T}_{H}(z_i)=0$  if $|z_i| < \kappa_H $, and $\mathcal{T}_{H}(z_i)=z_i$ otherwise.
\end{itemize}

\smallskip
For any $ \mathbf{z} \in \mathbb{R}^d $,
denote by $ s(\mathbf{z})=1-\|\mathbf{z}\|_0/d $  its sparsity.
For $\mathcal{T}_S$, it is trivial that  $\kappa_S/d \leq s(\mathbf{z}) <1 $.
For $ \mathcal{T}_{E} $, it was proved in \cite{hu2016sparse} that
\begin{align}
 \label{ThreKe}
 \lfloor\kappa_E d \rfloor/d \leq s(\mathbf{z}) \leq 1-1/d.
 \end{align}
For $ \mathcal{T}_{H} $, it was proved in \cite{hu2016sparse} that
\begin{align}
 \label{ThreKh}
 \begin{split}
  1-1/(d \kappa_H^2) \leq s(\mathbf{z}) \leq 1, \quad
  &\text{if} \; \kappa_H\geq 1/\sqrt{d};
 \\[5pt]
 0\leq s(\mathbf{z}) \leq 1-1/d , \quad
 &\text{if} \;\kappa_H<1/\sqrt{d}.
 \end{split}
\end{align}
Therefore, the truncation parameter can be used to control the sparsity of the loadings.

\section{SPCA via Subspace Projections}
In this section, we present our SPCA-SP algorithm.
The main contribution of the section is the construction of a series of subspace projections.
These projections will be used to restrict  the search space of each loading in a very low dimensional subspace
orthogonal to all previously computed sparse loadings.
We also find an interesting relationship between sparsity and orthogonality  after truncation operation.

\subsection{Sketch of SPCA-SP algorithm}
For $  t=1,2,\ldots, r $, we aim to find $\alpha_t $ sequentially such that
\begin{align}
\label{SDPCA}
 \alpha_t=\arg \max_{\alpha\in \mathbb{R}^m} \alpha^T ( P_{t-1}^T A_0 P_{t-1} ) \alpha,
\quad
  \text{s.t.}\;
  \|\alpha\|=1,
\end{align}
where $ \{P_{t-1} \in \mathbb{R}^{d \times m} \}_{t=1}^r $, $ m<\min\{n,d\} $, are subspace projections to be determined later.
In many practical fields, such as genomic analysis,
one can choose $ m \ll d $ to greatly save  the computational cost.

\smallskip
It is observable  that the vector $ P_{t-1} \alpha_t $, which belongs to the subspace spanned by the columns of the matrix $ P_{t-1} $,
is an approximation of principal component $ z_t $ in \eqref{DPCA}.
From the view point of rank-1 approximation,
\eqref{SDPCA} is identical to the following constrained optimization problem
\begin{align}
\label{EquiSDPCA}
 \min_{\alpha\in \mathbb{R}^m,\beta \in \mathbb{R}^n}\|XP_{t-1}-\beta \alpha^T\|_F^2,
 \quad
  \text{s.t.}\;
  \|\alpha\|=1,\; \|\beta\|=1.
\end{align}

\smallskip
In order to achieve a sparse loading, one could post process $ P_{t-1} \alpha_t $ by use of a truncation operation.
Specifically, for a given truncation operator $ \mathcal{T}_\lambda  $,  let
\begin{align}
\label{Truncation}
   \tilde{\mathbf{z}}_t=\frac{ \mathcal{T}_\lambda (P_{t-1} \alpha_t)}{\| \mathcal{T}_\lambda (P_{t-1} \alpha_t)\|},
   \quad
   \forall t\geq 1
\end{align}
be the corresponding sparse loading.
It should be noticed that the truncation parameter $ \lambda $  is a tuning parameter,
similar as the penalty weight in the penalized approaches.

\smallskip
It is worth emphasizing that the key difference between our SPCA-SP from most existing SPCA methods,
e.g. \cite{hu2016sparse,journee2010generalized,mackey2009deflation,yuan2013truncated},
is the introduction of an additional subspace projection in each round,
which intends to make the computation of each loading restricted in a low dimensional space,
and at the same time ensures the orthogonality of all sparse loadings.

\smallskip
Another noteworthy point is that  SPCA-SP is a postprocessing based algorithm without using sparsity penalization.
Sparsity penalized method is more commonly used in literature due to its distinct mathematical background .
But it seems impossible to apply such technique here,
because  $ P_{t-1}$  is not invertible
and thus commonly used block gradient descent technique \cite{wright2015} can not solve the corresponding optimization objective.

\smallskip
\begin{algorithm}[!htb]
    \caption{SPCA-SP deflation algorithm}
     {\bf Input:} Data matrix $X \in \mathbb{R}^{n \times d}$, number of sparse loadings  $ r $, subspace dimension $ m $,
      number of sampled rows $ c $,   truncation type  $\mathcal{T} $, and truncation parameter $ \lambda  $.
\begin{algorithmic}
     \For{$i=1,2,\ldots, n$}
        \State Generate probability $\xi_i = \|\mathbf{x}_{(i)}\|^2/ \|X\|_F^2$.
     \EndFor
     \For{$t=1,2,\ldots, c$}
        \State Sample  $ i_t \in \{1,\ldots,n\} $ with $\textbf{Pr}[i_t = \tau] = \xi_i, \: \tau = 1, \ldots, n$.
        \State Choose $ \mathbf{x}_{c,(t)} = \mathbf{x}_{(i_t)} / \sqrt{c \xi_{i_t}}$.
    \EndFor

    \State Compute $ X_c X_c^T $ and its SVD such that $X_c X_c^T =\sum_{j=1}^c \sigma_j^2  \mathbf{u}_j \mathbf{u}_j^T $.
    \State Set  $ P=[\mathbf{p}_1,\ldots,\mathbf{p}_m] $ with $ \mathbf{p}_i=(X_c^T \mathbf{u}_i )/\sigma_i $ .

    \For{$t=1,2,\ldots, r$}
        \State Compute the leading eigenvector $\alpha$ of $ P^T X^T X P$ by power method.
        \State Truncation: $ \tilde{\mathbf{z}}_t= \mathcal{T}_\lambda (P \alpha)/\|\mathcal{T}_\lambda (P \alpha)\| $.
        \State Construct the compound matrix $ B = [\tilde{\mathbf{z}}_1, \ldots, \mathbf{z}_t, P] $.
        \State Decompose  $ B=QR $ by Household QR factorization.
        \State Update  $ P=[\mathbf{q}_{t+1}, \ldots, \mathbf{q}_{t+m}]$;
    \EndFor
\end{algorithmic}
{\bf Output:} Sparse loadings $[\tilde{\mathbf{z}}_1, \ldots, \tilde{\mathbf{z}}_{r}] $.
\end{algorithm}

\subsection{Subspace projections}
In this section, we will introduce the construction of subspace projections in detail.
We  shall use the randomized SVD algorithm to construct an initial projection, just like fast PCA \cite{abra2014}.
After then,
we will employ a sequence of QR factorization to build the other subspace projections.
The related construction process is very technical.
It is worth to emphasize that the subspaces determined by these projections are orthogonal to the previously found sparse loadings.
It is well known that PCA loadings are orthogonal,
but as pointed out by \cite{hu2016sparse,mackey2009deflation},
this property is easily lost in SPCA.
Orthogonality is significant in SPCA because it ensures the independence of the physical meaning of the loadings,
thus further simplifying the interpretation.

\smallskip
Firstly, we turn to a fast SVD algorithm, named as LinearTimeSVD \cite{drineas2006fast},  to construct the projection $ P_0 $.
The purpose of using this algorithm is  to alleviate a part of time consuming.
 If $ n $ or $ d $ is not too large, exact SVD can also be used to construct the initial projection $ P_0 $.

\smallskip
For a given data matrix $X \in \mathbb{R}^{n \times d}$,
we generate a probability sequence as follows
\begin{align*}
    \xi_i = \displaystyle \frac{\mathbf{x}_{(i)}  \mathbf{x}_{(i)} ^T} { \|X\|_F^2}, \quad i = 1, 2, \ldots, n,
\end{align*}
where $ \mathbf{x}_{(i)}$ denotes the $i$-th row of $X$.
Let $ c $ be an integer satisfying $ c \leq \min \{n,d\} $.
For $ t=1,\ldots, c $ , sample  $ i_t \in \{1,\ldots,n\} $ with $\textbf{Pr}[i_t = \tau] = \xi_i, \: \tau = 1, \ldots, n$.
Let  $ X_c $ be a matrix of size $ c \times d  $,
the $t$-th row of which is determined by $ \mathbf{x}_{c,(t)} = \mathbf{x}_{(i_t)} / \sqrt{c \xi_{i_t}}$, $ t=1,\ldots, c $.
The singular value decomposition of  $  X_c X_c^T $ is denoted by
\begin{align*}
  X_c X_c^T =\sum_{j=1}^c \sigma_j^2  \mathbf{u}_j \mathbf{u}_j^T,
\end{align*}
where $\sigma_1  \geq \cdots \geq \sigma_c >0 $ are singular values of $ X_c $
and $ [\mathbf{u}_1,\ldots, \mathbf{u}_c]  $  forms an orthogonal matrix of size $c\times c$.
Choosing $ m \leq c $,  let
\begin{align*}
 \mathbf{p}_i = \displaystyle \frac{1}{\sigma_i}X_c^T \mathbf{u}_i \in \mathbb{R}^d, \quad  i=1,\ldots, m.
\end{align*}
The initial projection $ P_0 $  is then defined by
 \begin{align}
 \label{DefinitionP}
 P_0=[\mathbf{p}_1,\ldots,\mathbf{p}_m].
\end{align}
According to \cite{drineas2006fast},
the time complexity of the construction of $ P_0 $ is  $ O(nd+c^2d+c^3) $.

The sample size  $c$ and the subspace dimension $ m $ are free parameters.
For an expected cumulative percentage of explained variance $ 0<\texttt{CPEV}<1$,
one shall choose $c$ and $ m $  to satisfy
 \begin{align*}
 \frac{\mathbf{Tr}(P_0^T X^TX P_0) }{\mathbf{Tr}( X^TX)}> \texttt{CPEV},
\end{align*}
where $\mathbf{Tr}$ denotes the matrix trace.
This is not a difficult task when we are only interested in a few leading principal components.

\smallskip
Next, we are to construct the subspace projections $ P_t $, $t\geq 1 $ in a sequent manner,
based on the calculated sparse loadings.
We will employ QR factorization \cite{trefethen1997qr} in the construction process.
QR factorization decomposes the input matrix into the product of a square, orthogonal matrix $ Q $
and an upper triangular matrix $ R $.
It is usually used in solving linear systems of equations.
The QR decomposition of a matrix can be computed in different ways.
The use of Givens Rotations \cite{Givens1958} and
Householder reflections \cite{House1958} are two most commonly used and best known ones.
Here we prefer to use Householder version because it will be more efficient for the decomposition object defined below.

\smallskip
For any given matrix $ A \in \mathbb{R}^{n\times m} $,
a sequence of Householder reflections \cite{House1958} can be used to zero-out all the coefficients below the diagonal to compute its QR factorization:
\begin{align}
\label{HouseHold}
   H_mH_{m-1} \cdots H_1 A = R, \quad
   \text{ where}\;  H_mH_{m-1} \cdots H_1 = Q^T.
\end{align}
Each transformation $H_k$ annihilates the coefficients below the diagonal of column $ k  $
and modifies the coefficients in the trailing submatrix $ A(k:n,k+1:m) $.

\smallskip
Without loss of generality,
suppose that we have constructed the projection matrix $ P_{t-1}  $, $\forall t\geq 1$,
and obtained the corresponding sparse loadings $\tilde{\mathbf{z}}_1, \ldots, \tilde{\mathbf{z}}_t $.
To formulate the subsequent projection $ P_t $,
we introduce an auxiliary compound matrix as follows:
 $$
   B_t=[\tilde{\mathbf{z}}_1, \ldots, \tilde{\mathbf{z}}_t, P_{t-1} ] \in \mathbb{R}^{d \times (t+m)}.
$$
Applying Household QR factorization to the matrix $ B_t $ yields $  B_t= Q R $,
where $ Q=[\mathbf{q}_1,\ldots,\mathbf{q}_d] \in \mathbb{R}^{d \times d} $ is an orthogonal matrix.
Consequently,
the new subspace projection  $ P_t \in  \mathbb{R}^{d \times m}$  is constituted from a submatrix of $ Q $, i.e.,
\begin{align}
\label{PrjUpdate}
  P_t=[\mathbf{q}_{t+1}, \ldots, \mathbf{q}_{t+m}].
\end{align}
Noting that  $ Q^T B_t=R $ and $ R $ is an upper triangular matrix,
we immediately have
\begin{align}
\label{PrjOrt}
  P_t^T \tilde{\mathbf{z}}_i = \mathbf{0},
  \quad
  \forall 1\leq i \leq t.
\end{align}
The above property is desirable
because it means that the search space for the $(t+1)$-th untruncated loading
is orthogonal to all previously computed sparse loadings.
Therefore, we can anticipate that after small truncation, the sparse loadings are close to orthogonal.
This makes interpretation simpler.

\begin{remark}
In fact, there is no need to apply a complete QR factorization to matrix $ B_t $,
whose size will increase with $ t $.
Recall that $ B_{t}=[\tilde{\mathbf{z}}_1, \ldots, \tilde{\mathbf{z}}_{t-1}, \tilde{\mathbf{z}}_t, P_{t-1}] $,
where the first $(t-1)$ columns have already been treated in the previous steps.
Therefore, we only need to apply QR factorization for the submatrix $ [\tilde{\mathbf{z}}_t, P_{t-1}] \in \mathbb{R}^{d \times (m+1) }$.
The complexity of such decomposition is  $O(d(m+1)^2-(m+1)^3/3)$ \cite{trefethen1997qr}.
\end{remark}

\smallskip
\begin{remark}
In practice, $ t+m $ is usually far smaller than $ d $.
Even if $ t+m>d $, we could slightly modify $ P_t $ such that $ P_t=[\mathbf{q}_{t+1}, \ldots, \mathbf{q}_{\min\{d,t+m\}}] $.
\end{remark}

\subsection{Connection between orthogonality and sparsity}
In this section, we give a theoretical result about the connection
between orthogonality and sparsity after three truncation operations.
As far as we know, this relationship has never been revealed before.

\smallskip
First we introduce a notation to measure the orthogonality of two vectors.
Define
\begin{align*}
\langle \mathbf{a}, \mathbf{b}\rangle = 1-\frac{ |\mathbf{a}^T\mathbf{b}|}{ \|\mathbf{a}\|\|\mathbf{b}\|}.
\end{align*}
Observe that a larger $ \langle \mathbf{a}, \mathbf{b}\rangle $ means a better orthogonality.

\smallskip
The following lemma gives an upper bound of the orthogonality of two vectors after truncation.
\begin{lemma}
\label{SpaMeas}
    Let  $ \mathbf{a},\mathbf{b} $  be two unit orthogonal vectors in  $ \mathbb{R}^d $.
    For a given truncation operator $ \mathcal{T}_\lambda  $,
    let $ \mathbf{b}^+= \mathcal{T}_\lambda  (\mathbf{b}) $.
    Then
    \begin{align}
         \langle \mathbf{a},\mathbf{b}^+ \rangle \geq 1-\sqrt{1-\|\mathbf{b}^+\|^2}.
    \end{align}
\end{lemma}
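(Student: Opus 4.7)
The plan is to exploit the disjoint-support structure produced by each of the three truncation operators in the paper, and to combine it with the orthogonality $\mathbf{a}^T\mathbf{b}=0$ by applying Cauchy--Schwarz on \emph{both} the kept and the discarded index sets.

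First I would set notation. Let $S^+\subseteq\{1,\dots,d\}$ denote the support of $\mathbf{b}^+=\mathcal{T}_\lambda(\mathbf{b})$ and $S^-$ its complement, and write $\mathbf{b}^-=\mathbf{b}-\mathbf{b}^+$. Because $\mathcal{T}_S$, $\mathcal{T}_E$, and $\mathcal{T}_H$ all act by zeroing entries, $\mathbf{b}^+$ and $\mathbf{b}^-$ have disjoint supports, and hence $\|\mathbf{b}^+\|^2+\|\mathbf{b}^-\|^2=\|\mathbf{b}\|^2=1$. Orthogonality then yields $\mathbf{a}^T\mathbf{b}^+=-\mathbf{a}^T\mathbf{b}^-$, so in particular $|\mathbf{a}^T\mathbf{b}^+|=|\mathbf{a}^T\mathbf{b}^-|$.

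The key step is two applications of Cauchy--Schwarz. Abbreviate $\alpha=\|\mathbf{a}_{S^+}\|^2$ and $\beta=\|\mathbf{b}^+\|^2$. Restricting to the coordinates in $S^+$ gives $(\mathbf{a}^T\mathbf{b}^+)^2\le\alpha\beta$, while restricting to $S^-$ gives $(\mathbf{a}^T\mathbf{b}^-)^2\le(1-\alpha)(1-\beta)$, so $(\mathbf{a}^T\mathbf{b}^+)^2\le\min\{\alpha\beta,\,(1-\alpha)(1-\beta)\}$. A short case distinction closes the argument: if $\alpha\le 1-\beta$ then $\alpha\beta\le\beta(1-\beta)$, while if $\alpha\ge 1-\beta$ then $(1-\alpha)(1-\beta)\le\beta(1-\beta)$. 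Either way, $(\mathbf{a}^T\mathbf{b}^+)^2\le\beta(1-\beta)=\|\mathbf{b}^+\|^2(1-\|\mathbf{b}^+\|^2)$; dividing by $\|\mathbf{b}^+\|^2$ and taking square roots produces $|\mathbf{a}^T\mathbf{b}^+|/\|\mathbf{b}^+\|\le\sqrt{1-\|\mathbf{b}^+\|^2}$, which, after rearranging the definition of $\langle\cdot,\cdot\rangle$, is the stated inequality.

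The main obstacle to watch for is that a single, naive Cauchy--Schwarz bound with $\|\mathbf{a}\|=1$ gives only $|\mathbf{a}^T\mathbf{b}^+|\le\|\mathbf{b}^-\|=\sqrt{1-\|\mathbf{b}^+\|^2}$, which, after dividing by $\|\mathbf{b}^+\|\le 1$, is strictly weaker than the claim whenever $\|\mathbf{b}^+\|<1$. Recovering the missing factor of $\|\mathbf{b}^+\|$ in the upper bound on $|\mathbf{a}^T\mathbf{b}^+|$ is exactly what forces one to use the two-sided Cauchy--Schwarz combined with the case split on $\alpha$ versus $1-\beta$; this is where essentially all the content of the lemma lives. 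I would also note that the statement is implicitly restricted to $\mathbf{b}^+\neq\mathbf{0}$, so that $\langle\mathbf{a},\mathbf{b}^+\rangle$ is well defined.
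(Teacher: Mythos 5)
Your proof is correct, and it reaches the bound by a genuinely different (and in fact tighter-argued) route than the paper. The paper also splits everything according to the support of $\mathbf{b}^+$, writing $\mathbf{a}=\mathbf{a}^++\mathbf{a}^-$, but then it bounds $|(\mathbf{a}^+)^T\mathbf{b}^+|/\|\mathbf{b}^+\|$ by maximizing over all unit $\mathbf{x}$ with $\mathbf{x}^T\mathbf{b}=0$ and invokes a Lagrange-multiplier argument to claim the maximizer has the form $\mathbf{x}=k_1\mathbf{b}^++k_2\mathbf{b}^-$; from the constraint it deduces $\|\mathbf{x}^+\|^2+\|\mathbf{b}^+\|^2=1$ and hence the bound $\sqrt{1-\|\mathbf{b}^+\|^2}$. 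You avoid the variational step entirely: the identity $\mathbf{a}^T\mathbf{b}^+=-\mathbf{a}^T\mathbf{b}^-$ plus Cauchy--Schwarz on each of the two index sets gives $(\mathbf{a}^T\mathbf{b}^+)^2\le\min\{\alpha\beta,(1-\alpha)(1-\beta)\}\le\beta(1-\beta)$, and dividing by $\beta=\|\mathbf{b}^+\|^2$ finishes. What your version buys is rigor and self-containedness --- the paper's assertion about the form of the optimizer is stated without justification, whereas your case split on $\alpha$ versus $1-\beta$ is a complete elementary argument; it also makes transparent exactly where the extra factor of $\|\mathbf{b}^+\|$ (relative to the naive one-sided Cauchy--Schwarz) comes from. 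What the paper's version buys is that it identifies the extremal direction explicitly, which shows the bound is attained and thus sharp. Your remark that $\mathbf{b}^+\neq\mathbf{0}$ must be assumed for $\langle\mathbf{a},\mathbf{b}^+\rangle$ to be defined is a fair point that the paper glosses over.
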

\begin{proof}
Denote by $ \mathbf{sign}(\cdot) $ the sign function.
For any $ \mathbf{x} \in \mathbb{R}^d $, set
\begin{align*}
  & \mathbf{x}^+=(|\mathbf{sign}(b^+_1)| x_1,\ldots, |\mathbf{sign}(b^+_d)| x_d)^T,
  \\[5pt]
 & \mathbf{x}^-=\mathbf{x}-\mathbf{x}^+.
\end{align*}
It is trivial that $ (\mathbf{x}^+)^T \mathbf{x}^-=0 $.
Then
\begin{align}
\label{Ineq0}
  1-\langle \mathbf{a}, \mathbf{b}^+ \rangle
  =\frac{ |\mathbf{a}^T \mathbf{b}^+|}{\|\mathbf{b}^+\|}
  =\frac{ | (\mathbf{a}^+)^T \mathbf{b}^+|}{\|\mathbf{b}^+\|}
  \leq \max_{\|\mathbf{x}\|=1,\; \mathbf{x}^T\mathbf{b}=0} \frac{ | (\mathbf{x}^+)^T \mathbf{b}^+|}{\|\mathbf{b}^+\|}.
\end{align}
By Lagrange multiplier technique,
it is easy to see that the optimal solution to the right-hand side of \eqref{Ineq0} takes the form as
$ \mathbf{x}= k_1 \mathbf{b}^++ k_2 \mathbf{b}^- $, with $ k_1,k_2  $ being two constants.
The orthogonal constraint $ \mathbf{x}^T \mathbf{b}=0 $ implies that $ (\mathbf{x}^+)^T \mathbf{b}^+ +(\mathbf{x}^-)^T \mathbf{b}^-=0$,
where $ \mathbf{x}^+= k_1 \mathbf{b}^+$ and $\mathbf{x}^-= k_2 \mathbf{b}^- $.
Therefore
 \begin{align*}
 \|\mathbf{x}^+ \| \|\mathbf{b}^+\|&= | (\mathbf{x}^+)^T \mathbf{b}^+|=| (\mathbf{x}^-)^T \mathbf{b}^-|
 \\[5pt]
 &= \| \mathbf{x}^-\| \|\mathbf{b}^-\|
   =\sqrt{1-\|\mathbf{x}^+\|^2} \sqrt{1-\|\mathbf{b}^+\|^2},
\end{align*}
which implies that $ \|\mathbf{x}^+\|^2+\|\mathbf{b}^+\|^2 = 1$
This estimate together with \eqref{Ineq0} yields
\begin{align*}
 1-\langle \mathbf{a}, \mathbf{b}^+ \rangle \leq \| \mathbf{x}^+ \| = \sqrt {1-\|\mathbf{b}^+\|^2},
\end{align*}
which immediately yields the desired result.
\end{proof}

\smallskip
The next proposition gives the relationship between sparsity and orthogonality
after applying  the truncation operators introduced in Section 2.2.
\begin{proposition}
\label{SpaOrtho}
Let  $ \mathbf{a},\mathbf{b}\in \mathbb{R}^d $ be two unit orthogonal vectors.
Let $ \mathbf{b}^+= \mathcal{T}_\lambda  (\mathbf{b}) $,
where $ \mathcal{T}_\lambda $ is one of three  truncation operators in Section 2.2.
Then for $ \mathcal{T}_S$, when $ 0<\kappa_S < d $,
\begin{align}
\label{Ts}
         \langle \mathbf{a},\mathbf{b}^+ \rangle \geq 1-\sqrt{\kappa_S/d}.
\end{align}
For $ \mathcal{T}_E $, when $ 0<\kappa_E < 1$,
\begin{align}
\label{Te}
          \langle \mathbf{a},\mathbf{b}^+ \rangle  \geq 1-\sqrt{\kappa_E}.
\end{align}
For $ \mathcal{T}_E $, when $\kappa_H > 0 $,
\begin{align}
\label{Th}
           \langle \mathbf{a}, \mathbf{b}^+ \rangle \geq  1-\sqrt{1-\|\mathbf{b}^+\|_0 \kappa_H^2}.
\end{align}
Here $ \kappa_S $, $ \kappa_E $ and $ \kappa_H $ are corresponding truncation parameters, respectively.
\end{proposition}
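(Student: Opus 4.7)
The plan is to reduce all three bounds in Proposition \ref{SpaOrtho} to Lemma \ref{SpaMeas}, which already gives
\[
 \langle \mathbf{a},\mathbf{b}^+\rangle \geq 1-\sqrt{1-\|\mathbf{b}^+\|^2}.
\]
Since the square root is monotone, it suffices to derive, for each of the three truncation operators, an upper bound on $1-\|\mathbf{b}^+\|^2 = \|\mathbf{b}\|^2 - \|\mathbf{b}^+\|^2 = \sum_{i\text{ truncated}} b_i^2$, and then substitute. This turns a geometric statement about orthogonality into three elementary estimates on the mass that is thrown away by truncation.

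First I would handle $\mathcal{T}_S$. Order the entries of $\mathbf{b}$ so that $|\bar b_1|\leq\cdots\leq|\bar b_d|$; the truncated entries are $\bar b_1,\ldots,\bar b_{\kappa_S}$. Since these are the smallest squared values, their arithmetic mean is no greater than the overall mean $\|\mathbf{b}\|^2/d = 1/d$, so $\sum_{i=1}^{\kappa_S}\bar b_i^2 \leq \kappa_S/d$. Hence $1-\|\mathbf{b}^+\|^2 \leq \kappa_S/d$, which combined with Lemma \ref{SpaMeas} yields \eqref{Ts}.

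Next I would address $\mathcal{T}_E$. Here the definition directly gives $\sum_{i\text{ truncated}} \bar b_i^2 \leq \kappa_E \|\mathbf{b}\|^2 = \kappa_E$, so $1-\|\mathbf{b}^+\|^2 \leq \kappa_E$ and \eqref{Te} follows from Lemma \ref{SpaMeas}. For $\mathcal{T}_H$, the surviving entries satisfy $|b_i^+| \geq \kappa_H$ by definition, so $\|\mathbf{b}^+\|^2 = \sum_{b_i^+\neq 0}(b_i^+)^2 \geq \|\mathbf{b}^+\|_0 \kappa_H^2$, giving $1-\|\mathbf{b}^+\|^2 \leq 1-\|\mathbf{b}^+\|_0\kappa_H^2$, which yields \eqref{Th}.

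The only step that requires any thought is the $\mathcal{T}_S$ bound, because the cleanest-looking alternative (using $\|\mathbf{b}\|_\infty \geq 1/\sqrt{d}$) is too loose; the right observation is the \emph{mean-of-smallest} argument above, which exploits the ordering of the squared entries rather than any single extreme value. The other two bounds are essentially immediate from the definitions of the truncation operators. I do not anticipate any genuine obstacle; the content of the proposition is really that Lemma \ref{SpaMeas} becomes informative once one plugs in the sparsity-controlled loss of $\ell_2$-mass appropriate to each operator.
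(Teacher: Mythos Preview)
Your proposal is correct and follows essentially the same route as the paper: apply Lemma~\ref{SpaMeas} to reduce everything to bounding $1-\|\mathbf{b}^+\|^2=\|\mathbf{b}^-\|^2$, then handle $\mathcal{T}_E$ and $\mathcal{T}_H$ straight from the definitions and $\mathcal{T}_S$ by an averaging argument. The paper phrases the $\mathcal{T}_S$ step as ``mean of the truncated squares $\leq$ mean of the retained squares'' (i.e.\ $\|\mathbf{b}^-\|^2/\kappa_S\leq (1-\|\mathbf{b}^-\|^2)/(d-\kappa_S)$), whereas you compare the mean of the truncated squares to the overall mean $1/d$; these are equivalent and yield the same bound $\|\mathbf{b}^-\|^2\leq \kappa_S/d$.
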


\begin{proof}
Set $ \mathbf{b}^-=\mathbf{b}-\mathbf{b}^+ $. According to Lemma \ref{SpaMeas},
\begin{align}
\label{EstT1}
         \langle \mathbf{a},\mathbf{b}^+ \rangle \geq 1-\sqrt{1-\|\mathbf{b}^+\|^2}=1-\|\mathbf{b}^-\|.
\end{align}

\smallskip
If $\mathcal{T}_S$ is used, then
\begin{align*}
     \frac{1-\|\mathbf{b}^-\|^2}{d-\kappa_S} \geq \frac{\|\mathbf{b}^-\|^2}{ \kappa_S },
\end{align*}
which implies that $ \|\mathbf{b}^-\|^2 \leq \kappa_S/d $. Thus \eqref{Ts} holds well.

\smallskip
If $\mathcal{T}_{E}$ is used,
then the desired result $ \eqref{Te} $ follows immediately from the fact that $\|\mathbf{b}^-\|^2 \leq \kappa_E$.

\smallskip
If $\mathcal{T}_{H}$ is used, then $  \|\mathbf{b}^+\|^2 \geq \|\mathbf{b}^+\|_0 \kappa_H^2 $.
Applying this estimate in \eqref{EstT1} yields the desired result $ \eqref{Th} $.
\end{proof}
In view of Proposition \ref{SpaOrtho} and \eqref{PrjOrt},
we can use truncation parameters to control the orthogonality performance of the sparse loadings.

\section{Experiments}
In order to evaluate the effectiveness of the proposed SPSCA-SP algorithm,
we conduct experiments on four data sets:
a synthetic data with some underlying sparse loadings \cite{zou2006sparse};
classical Pitprops data \cite{jeffer1967pitprops};
Gene data with high dimension and small sample size \cite{golub1999gene};
and a set of random data with increasing dimensions.

We compare our SPSCA-SP with several baseline algorithms,
including SPCA  \cite{zou2006sparse}, PathSPCA \cite{daspremont2007full}, Tpower \cite{yuan2013truncated} and SPCArt\cite{hu2016sparse}.
We programme Tpower, SPCArt and SPCA-SP in Python.
The results of SPCA and PathSPCA come directly from the references.
We are mainly interested in following criteria.
\begin{itemize}
\item
\textbf{Cumulative proportion of explained variance} is defined by
$$
    \texttt{CPEV}=\frac{\mathbf{Tr}(W^T X^T X W)}{\mathbf{Tr}(X^T X)},
$$
where $W=[\textbf{w}_1, \ldots, \textbf{w}_r] \in \mathbb{R}^{d\times r}$
is a set of unit orthogonal basis of the space $\texttt{span}\{\mathbf{\tilde{z}}_1, \ldots, \mathbf{\tilde{z}}_r\} $.
\item
\textbf{Orthogonality}. Given a loading matrix $ Z=[\mathbf{\tilde{z}}_1, \ldots, \mathbf{\tilde{z}}_r]\in \mathbb{R}^{d\times r} $, we use
$$
   1-\frac{|Z^TZ|-\mathbf{Tr}(Z^TZ)}{r(r-1)}
$$
to measure the total orthogonality,
where $|\cdot|$ denotes the sum of the absolute values of all entries of a matrix.

\item
Denote by \textbf{NZ} the total number of non-zeros in loadings.
Let $ \textbf{SP}=1-\textbf{NZ}/(rd) $ denote the total sparsity.
\textbf{Loading pattern} describes the balance of sparsity among the loadings.
For example, 3-3-3-3-3-3 means that the number of non-zeros in each loading is 3.
As pointed out by \cite{hu2016sparse}, a quite few existing algorithms yield unreasonable sparsity patterns
 such that highly dense leading loadings close to those of PCA, while the minor ones are sparse.

\item
\textbf{CPU time} measures the running time of the algorithms.
\end{itemize}

\subsection{Synthetic Data}
In this section, we test whether SPCA-SP can recover some underlying sparse loadings
of the synthetic data introduced in \cite{zou2006sparse},
which include three hidden Gaussian factors
\begin{align*}
  h_1 \sim \mathcal{N} (0, 290),
  \quad
   h_2 \sim \mathcal{N} (0, 300),
  \\[5pt]
  h_3  = -0.3 h_1 + 0.925 h_2 + \epsilon,
 \quad\epsilon \sim \mathcal{N}(0, 1).
\end{align*}
Then 10 observable variables are generated by
\begin{align*}
d_i = h_1 + \epsilon_i^1, \quad \epsilon_i^1 \sim \mathcal{N}(0, 1),  \quad  &i = 1, 2, 3, 4, \\
d_i = h_2 + \epsilon_i^2, \quad  \epsilon_i^2 \sim \mathcal{N}(0, 1), \quad  &i = 5, 6, 7, 8, \\
d_i = h_3 + \epsilon_i^3, \quad  \epsilon_i^3 \sim \mathcal{N}(0, 1), \quad  &i = 9, 10.
\end{align*}
We consider $ r=2 $ since the first two principal components  together explain 99.6\% of the total variance.
Note that $h_1$ and $h_2$ are independent,
while $h_3$ is correlated with both of them but more dependent on $h_2$.
The most acceptable two sparse loading patterns are $ 1-4 $, $9-10$; $ 5-10$ and $ 1-4 $; $ 5-10$.
\begin{table}[!htb]
\tbl{Recovering sparse loadings of syntectic data ($r=2$) }
      {\begin{tabular}{|c|cc|cc|cc|}
        \hline
        & \multicolumn{2}{c}{$\mathcal{T}_S$ ($\kappa_S=4 $)} & \multicolumn{2}{|c}{$\mathcal{T}_E$  ($\kappa_E=0.2$)} &
        \multicolumn{2}{|c|}{$\mathcal{T}_H$  ($\kappa_H=1/\sqrt{d}$)}   \\
        \hline
         \hline
                  & $\mathbf{\tilde{z}}_1$ &$ \mathbf{\tilde{z}}_2$
                  & $\mathbf{\tilde{z}}_1$ &$ \mathbf{\tilde{z}}_2$
                  & $\mathbf{\tilde{z}}_1$ &$ \mathbf{\tilde{z}}_2$    \\
        \hline
$d_{1  }$   &    0.0000     & 0.4952     &    0.0000     & 0.5000     &    0.0000     & 0.5000    \\
$d_{2  }$   &    0.0000     & 0.4952     &    0.0000     & 0.5000     &    0.0000     & 0.5000    \\
$d_{3  }$   &    0.0000     & 0.4952     &    0.0000     & 0.5000     &    0.0000     & 0.5000    \\
$d_{4  }$   &    0.0000     & 0.4952     &    0.0000     & 0.5000     &    0.0000     & 0.5000    \\
$d_{5  }$   &    0.4057     & 0.0000     &    0.4058     & 0.0000     &   0.4058      & 0.0000     \\
$d_{6  }$   &    0.4058     & 0.0000     &    0.4058     & 0.0000     &   0.4057      & 0.0000     \\
$d_{7  }$   &    0.4057     & 0.0000     &    0.4057     & 0.0000     &   0.4058      & 0.0000     \\
$d_{8  }$   &    0.4057     & 0.0000     &    0.4058     & 0.0000     &   0.4057      & 0.0000     \\
$d_{9  }$   &    0.4132     & 0.0978     &    0.4139     & 0.0000     &   0.4125      & 0.0000     \\
$d_{10 }$  &    0.4132     & 0.0978     &    0.4125     & 0.0000     &  0.4139       & 0.0000     \\
        \hline
         CPEV   & \multicolumn{2}{c}{0.9943} & \multicolumn{2}{|c}{0.9840} &
         \multicolumn{2}{|c|}{0.9840} \\
        \hline
    \end{tabular}}
    \label{Synsp}
\end{table}
We take $ c=5 $ and  $ m =3 $ as an example.
The computed sparse loadings based on three types of truncation operations are listed in Table \ref{Synsp}.
It is obvious that SPCA-SP successfully recovers the desirable sparse loading patterns.

\subsection{Pitprops data}
Pitprops data, which contains 180 observations and 13 features,
is a standard benchmark to evaluate the performance  of SPCA algorithms.
The first six principal components explain 86.9\% variance of the data.
The algorithms are tested to find  $ r=6$  sparse loadings.
We directly cite the best results reported in  \cite{hu2016sparse} for the baseline algorithms.
Set  $ c=11 $ and $ m=5 $.
The truncation parameters in SPCA-SP are tuned to yield the expected loading patterns.

\smallskip
It is  observed from Table \ref{tableCP} that
SPCA-SP has achieved the competitive empirical performance with these baseline algorithms.
Especially, in the case of balanced loading pattern 3-3-3-3-3-3,
SPCA-SP ($\mathcal{T}_{S}, \kappa_S = 10$)  obtains the maximum explained variance,
while its orthogonality is better than that of all other algorithms except SPCA.
Furthermore,  SPCA-SP ($\mathcal{T}_{H}, \kappa_H=0.4$)  outperforms classical SPCA under a more sparse loading mode.

\begin{table}[!hb!t]
\tbl{Comparison of SPCA-SP with four baseline methods on Pitprops data ($r=6$).}
    {\begin{tabular}{|c|c|c|c|c|}
        \hline
        Algorithms & NZ & Loading Pattern & Orthogonality &   CPEV \\
        \hline
        \hline
        SPCA       & 18 & 3-3-3-3-3-3 & 0.9905 & 0.7727 \\
        PathSPCA & 18 & 3-3-3-3-3-3 & 0.9516 & 0.7840 \\
        SPCArt     & 18 & 3-3-3-3-3-3 & 0.9572 & 0.7514 \\
        Tpower     & 18 & 3-3-3-3-3-3 & 0.9545 & 0.7819 \\
        SPCA-SP ($ \kappa_{S}=10 $) & 18 & 3-3-3-3-3-3 & 0.9576 & 0.7865\\
        \hline
        SPCArt                                        & 18 & 4-2-4-3-3-2 & 0.9819 & 0.8013 \\
        SPCA-SP ($ \kappa_{H}=0.35 $)  & 17 & 5-2-4-2-2-2 & 0.9643 & 0.8056 \\
        SPCA-SP ($ \kappa_{E}=0.4$)     & 13 & 3-3-2-2-2-1 & 1.0000 & 0.7765 \\
        \hline
    \end{tabular}}
    \label{tableCP}
\end{table}

\subsection{Gene data}
In this section, we consider high-dimensional Gene data \cite{golub1999gene}, which contains 72 samples with 7129 variables,
Since $ n\ll d$, we use an exact SVD algorithm to generate the initial subspace projection for SPCA-SP.

\smallskip
We first examine how the performance of SPCA-SP is affected by the subspace dimension when $ 2\leq m \leq 72 $.
For simplicity,
we fix truncation parameters  as $ \kappa_S =5500 $, $ \kappa_E=0.01 $ and $\kappa_H=1/300 $.
As shown in Figure \ref{genero},
when  $ m $ increases from 2 to 6,
explained variance, sparsity and orthogonality all increase rapidly.
After then the performance on three criteria becomes stable, almost unchanged.
On the other hand, the time cost goes approximately linearly with the subspace dimension.
These observations demonstrate that SPCA-SP is robust and stable with respect to the subspace dimension $ m $.
The observation results also show that in practical application,
a proper small $ m $  is enough to ensure a good tradeoff between explained variance, sparsity, orthogonality and computational speed.

\begin{figure}[!thb]
\label{Robust}
    \centering
     \scalebox{0.78}{\includegraphics{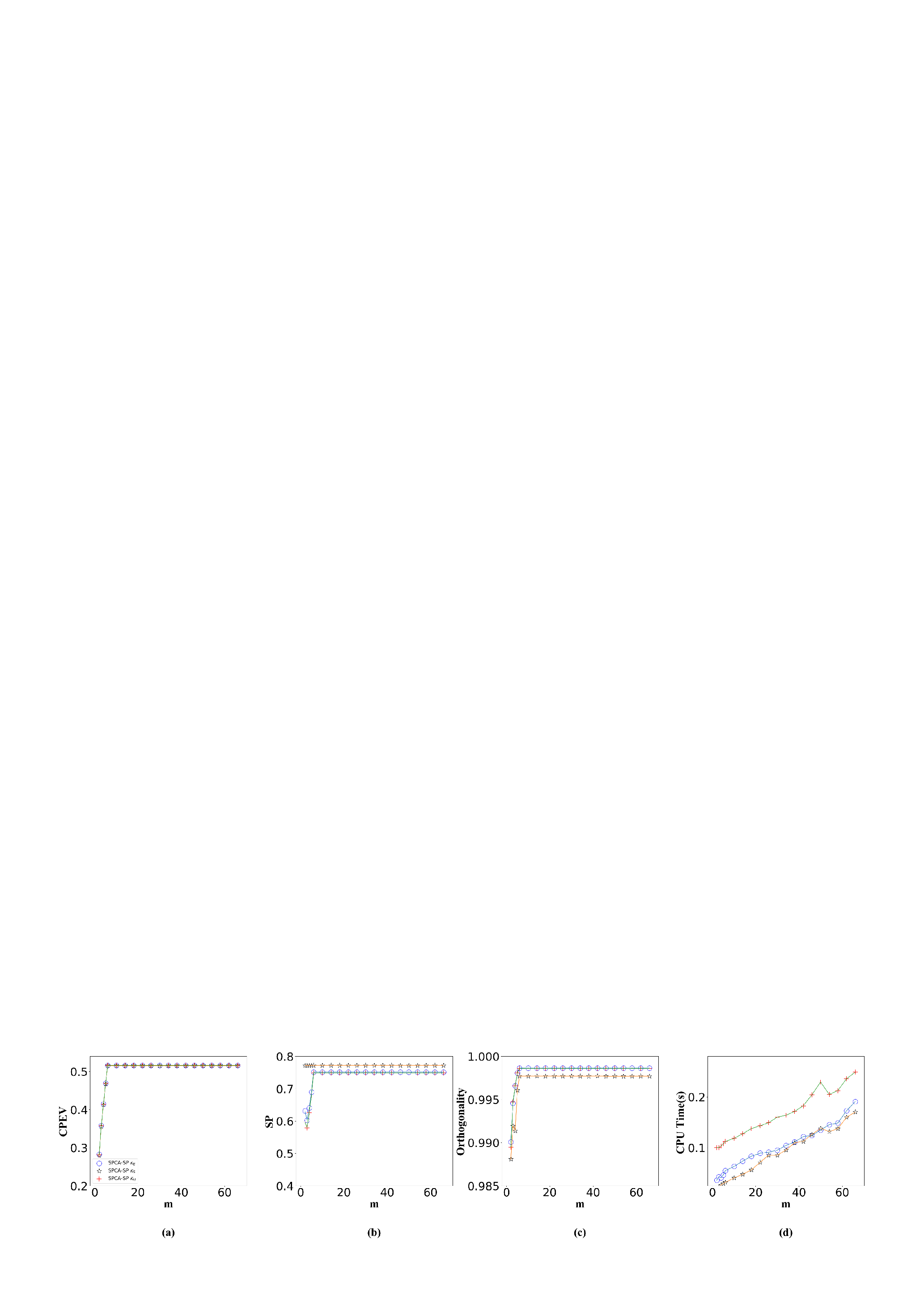}}
    \caption{Performance of SPCA-SP on gene data with increasing subspace dimension $ m $  ($r=6$). (a) CPEV. (b) Sparsity. (c) Orthogonality. (d) CPU time.}
    \label{genero}
\end{figure}

 \begin{figure}[!thb]
\label{SameSpGene}
    \centering
    \scalebox{0.75}{\includegraphics{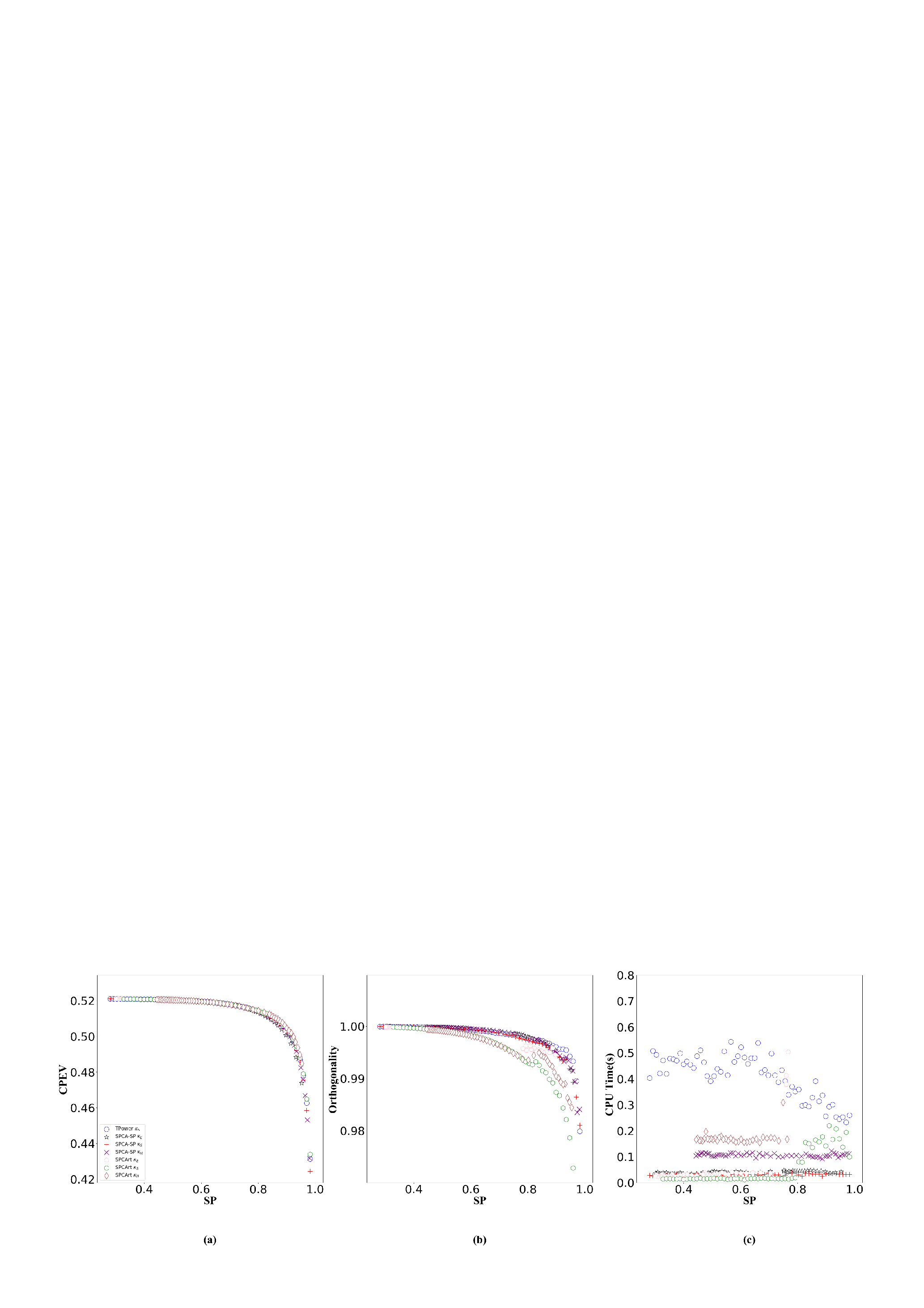}}
    \caption{SPCA-SP ($m=6$) versus Tpower and SPCArt on gene data ($r=6$).  (a) CPEV. (b) Orthogonality. (c) CPU time. }
    \label{genesp}
\end{figure}

\smallskip
Next, we choose Tpower and SPCArt algorithms as our comparison objects.
As shown in \cite{hu2016sparse,yuan2013truncated},
both methods are among the top fast SPCA solvers.
Specifically, Tpower is an iterative power method along with $\mathcal{T}_S $ truncation,
while SPCArt belongs to a block method alternatively rotating the PCA basis and truncating small entries.

\smallskip
We run the algorithms on a range of truncation parameters.
The subspace dimension in SPCA-SP is kept as $ m=6 $.
The results of CPEV, orthogonality and computational time under the same sparsity are depicted in Figure \ref{genesp}.
It is obvious from Figure \ref{genesp} (a) and (b) that Tpower, SPCArt and SPCA-SP perform similarly on explained variance and orthogonality.
But from Figure \ref{genesp} (c), one can find that only the time consuming of SPCA-SP is  stable with respect to sparsity.
Furthermore, due to the introduction of subspace projections,
our SPCA-SP method achieves the best performance on the computational speed as expected.

\subsection{Random data}
In this section, we consider random data sets with increasing dimensions.
As in \cite{hu2016sparse,journee2010generalized},
we first consider zero-mean, unit variance Gaussian data with $ d=100, 400,700,1000,1300 $,  and take  $ n=d+1 $.
We still compare SPCA-SP with Tpower and SPCArt methods.
For simplicity, we use the truncation $ \mathcal {T}_S $ for three methods.
Once the truncation parameter $\kappa_S $ is fixed,  all three methods have the same sparsity.
We take $ r=20 $ and $ \kappa_S=[0.7d] $ as an example.
In SPCA-SP, let $c = d/2 $ and $ m=80 $.
It is observed from Figure \ref{Random1}  that the performance of all three methods are similar on explained variance and orthogonality.
But, as reflected in Figure \ref{Random1} (c),  the time cost of Tpower and SPCArt increases nonlinearly with the data dimension,
while that of SPCA-SP is lower and goes almost linearly.

\begin{figure}[!thb]
    \centering
    \scalebox{0.75}{\includegraphics{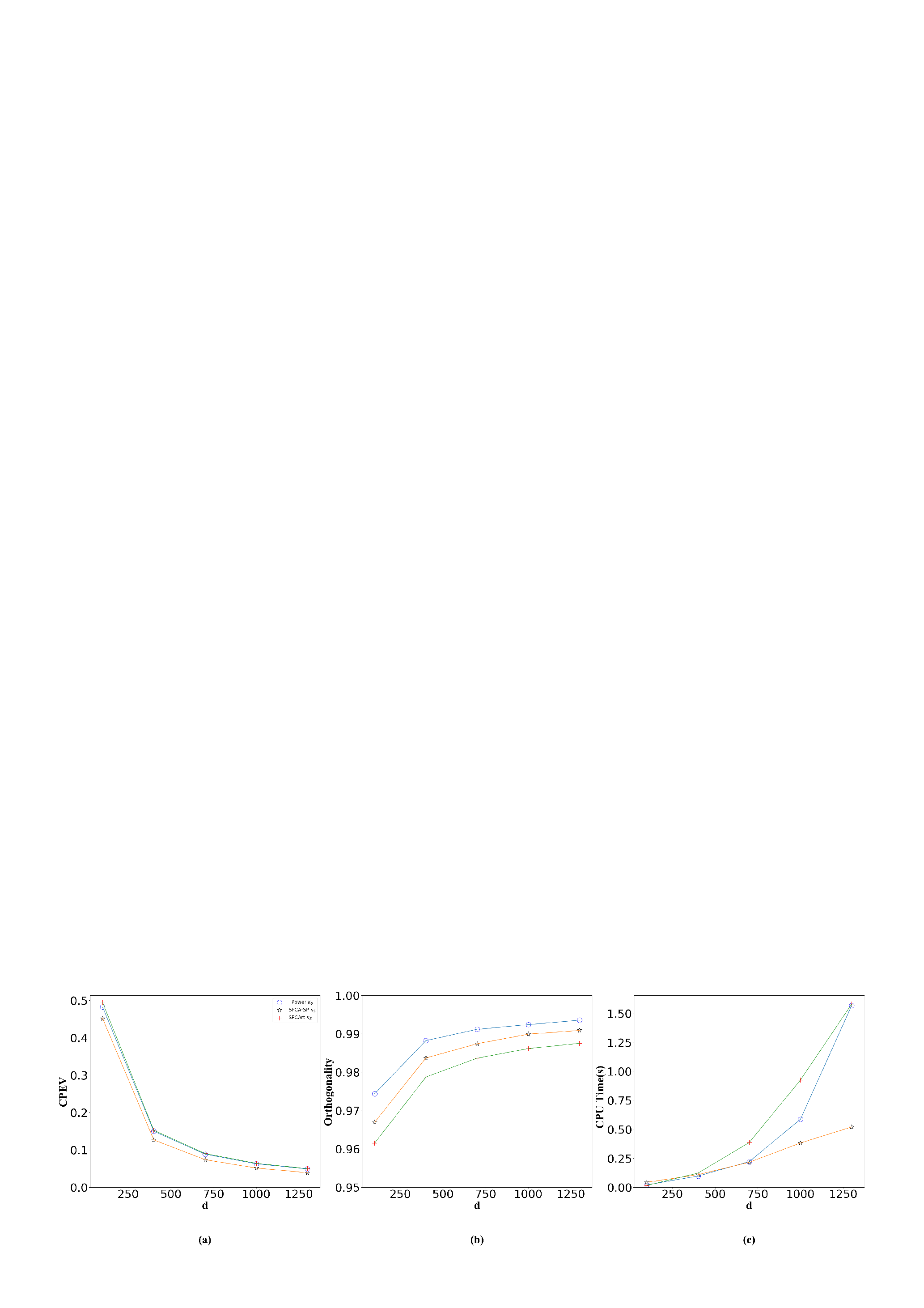}}
    \caption{SPCA-SP versus Tpower and SPCArt on random data ($n\simeq d $) with increasing data dimension (a) CPEV. (b) Orthogonality. (c) CPU time. }
    \label{Random1}
\end{figure}

Next we consider an extra high-dimensional case with $ n \ll d $.
We  fix $ n=500 $ and consider zero-mean, unit variance Gaussian data with $d=1000, 4000, 10000, 20000, 30000 $.
An exact SVD algorithm is employed to yield the initial subspace projection.
The subspace dimension is fixed as $ m=30 $.
We take $ r=20 $ and $ \kappa_S=[0.85d] $ as an example.
From Figure  \eqref{Random2},
we find the time consuming of SPCA-SP for such extra high-dimensional data is much lower as compared to Tpower and SPCArt.

\begin{figure}[!thb]
    \centering
    \scalebox{0.75}{\includegraphics{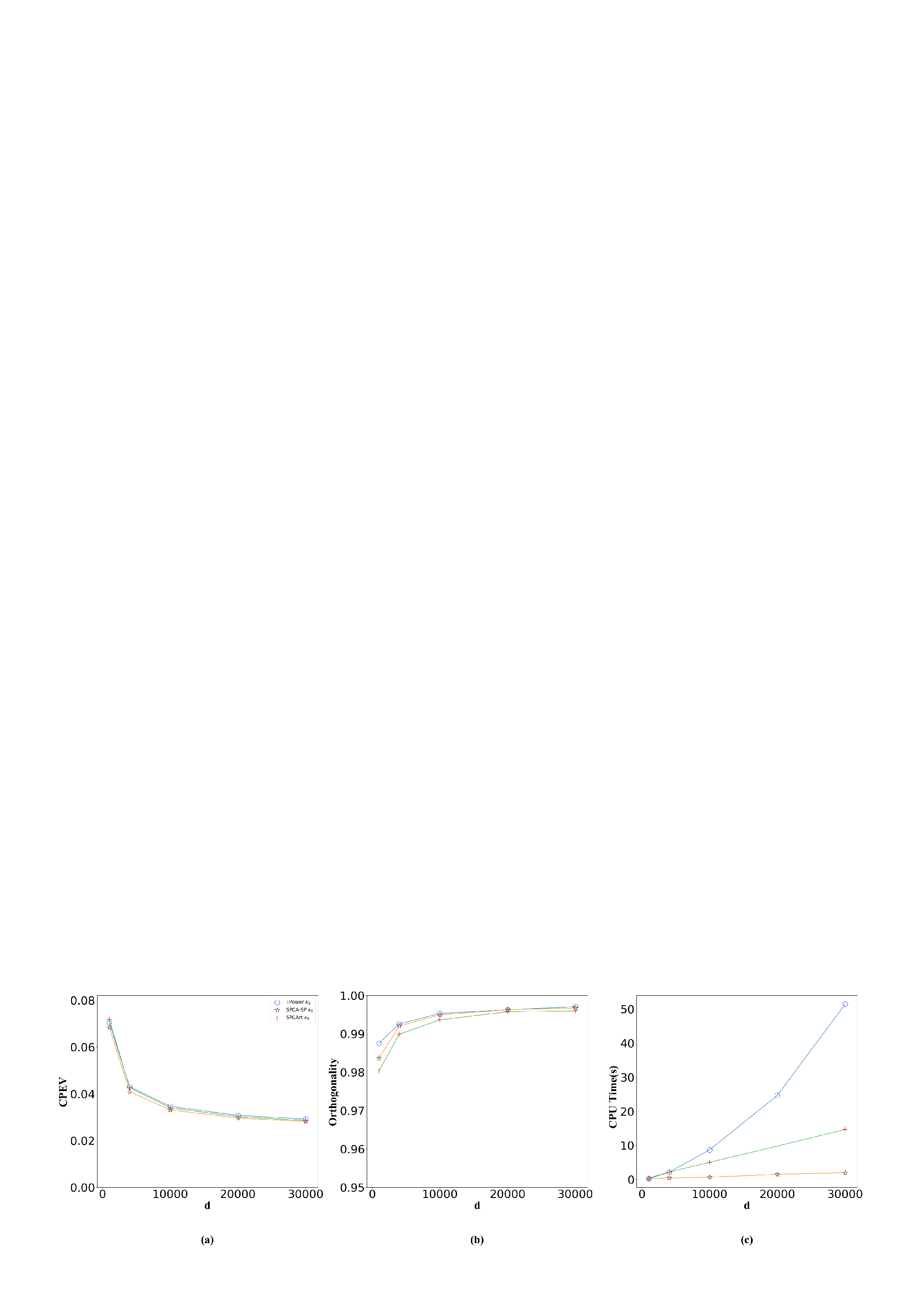}}
    \caption{SPCA-SP versus Tpower and SPCArt on random data ($n \ll d $) with increasing data dimension (a) CPEV. (b) Orthogonality. (c) CPU time. }
    \label{Random2}
\end{figure}

\smallskip
The results from Figure \ref{Random1} and Figure \ref{Random2} demonstrate that
our SPCA-SP is  highly efficient in processing high-dimensional data.

\section{Concluding Remarks}
Most conventional SPCA approaches are time consuming when dealing with high-dimensional data.
With the aid of QR factorization and certain compound matrices,
a series of subspace projections were  developed in this paper to enable a fast SPCA.
Because of its simplicity and efficiency,
Household reflection is preferred in the decomposition process,
and thus only one small submatrix needs to be processed in each round.

\smallskip
The proposed projections restrict the calculation of each loading
in a very low dimensional space, while taking into account the orthogonality of the sparse loadings.
These two characteristics make the SPCA-SP method achieve a good tradeoff between
sparsity, orthogonality, explained variance, balance of sparsity among loadings, and computational cost.
The comparative results in the previous section indicate that
SPCA-SP is an attractive and practical one in handling high-dimensional data and looking for many loadings.

\section*{Funding}
This work is supported in part by the National Natural Science Foundation of
China under grants 11771257, by the Shandong Province Natural Science Foundation under grant ZR2018MA008.

\end{document}